\documentclass[conference]{IEEEtran}

\usepackage{amsmath, amsthm, amssymb}
\usepackage{graphicx}
\usepackage{hyperref}
\usepackage{cite}
\usepackage{xcolor}
\usepackage{booktabs}
\usepackage{float}

\newtheorem{proposition}{Proposition}
\newtheorem{corollary}{Corollary}
\newtheorem{remark}{Remark}

\newcommand{\Bt}{\mathcal{B}_t}
\newcommand{\xstar}{x^*}
\newcommand{\phit}{\phi_t}
\newcommand{\Deltaphi}{\Delta\phi_t}
\newcommand{\Va}{V(a)}
\newcommand{\Qa}{Q(a)}
\newcommand{\Rd}{\mathbb{R}^d}
\newcommand{\Sd}{S^{d-1}}

\title{Synchronizing Beliefs with Second-Order Theory-of-Mind in Human-Autonomy Teams}

\author{
  \IEEEauthorblockN{Jack Mirenzi}
  \IEEEauthorblockA{
    Robotics Institute, Carnegie Mellon University \\
    \texttt{jmirenzi@andrew.cmu.edu}
  }
  \and
  \IEEEauthorblockN{Henny Admoni}
  \IEEEauthorblockA{
    Robotics Institute, Carnegie Mellon University \\
    \texttt{hadmoni@andrew.cmu.edu}
  }
}

\begin{document}
\maketitle

% ── Abstract ──────────────────────────────────────────────────────────────────
\begin{abstract}
  Comparative feedback, asking people which of two behaviors they prefer, has become a standard way to align robot and agent behavior with human intent when the reward itself cannot be specified directly.
  Preference-based reward learning typically casts the human teacher as a passive
  oracle answering learner-generated queries. We argue this forfeits the
  teacher's defining advantage: knowledge of the objective.  A teacher who
  knows the target can construct training examples more efficiently than
  any learner-driven acquisition strategy, an advantage that widens as the
  reward's feature dimension grows. However, exploiting this
  advantage requires an accurate model of what the learner
  currently knows. We therefore recast preference learning as a
  human-autonomy team problem coupling two behavioral models: the teacher
  maintains a model of the learner to design an informative curriculum,
  and the learner maintains a second-order model of the teacher's model,
  emitting structured preference constraints (\emph{understanding
  statements}) that keep the teacher's model of the learner synchronized. In simulation, an informed teacher outperforms learner-led selection; teacher-model drift under alternating teachers erodes this advantage; and understanding statements repair it, with second-order (ToM-2) statements outperforming
  mean-belief statements when the teacher's error about the learner is concentrated in a particular direction rather than spread evenly.
\end{abstract}

% ── 1. Introduction ───────────────────────────────────────────────────────────

\section{Introduction}
\label{sec:intro}

Demonstrating optimal behavior is often difficult or unnatural, so robots increasingly learn reward functions from comparative preference feedback instead. In preference-based inverse reinforcement learning (IRL), the human is cast
as a passive oracle: the learner generates candidate queries and the human
labels them~\cite{sadigh-rss17,biyik-corl18}. This framing ignores the
defining asymmetry of teaching: the teacher knows the objective and the
learner does not. We would not ask a student to learn only from questions
they themselves pose; the same pedagogy applies to robot reward learning.
Letting the learner drive query selection forfeits the teacher's most
valuable resource: a direct line to the goal.

This forfeiture has a dimensional cost. A learner can only probe
directions in which it is currently uncertain; it cannot aim at a target
it has not yet learned, and as the feature space grows, blind probing
aligns with the goal ever more rarely. A teacher who knows the target
constructs examples pointed toward it, and its aim does not degrade with
dimension. The result is a per-round alignment advantage for
teacher-guided examples that widens with feature dimension, and thus with
the complexity of the robot's reward (Section~\ref{sec:analysis}).
This gap is a property of \emph{not knowing} the target, not of
any particular learner heuristic; it binds every belief-only acquisition
rule, volume removal and information gain alike.

\begin{figure}[t]
  \centering
  \hspace{-.5cm}\includegraphics[width=.95\linewidth]{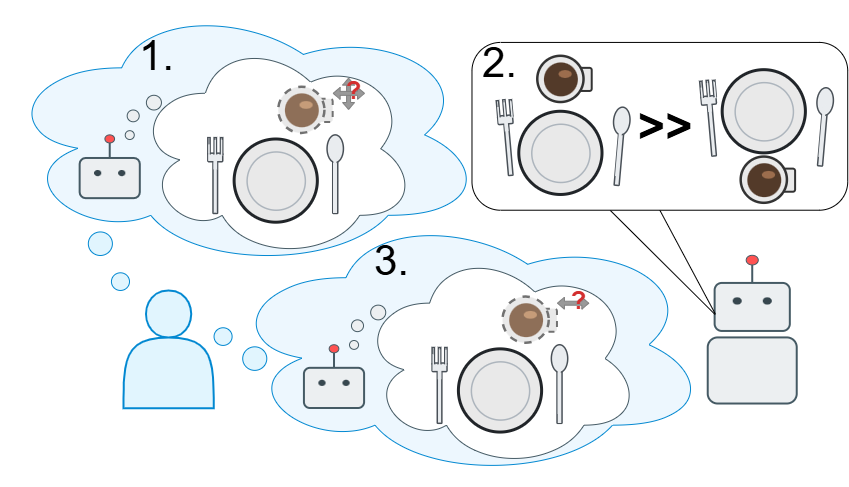}
  \caption{The ToM-2 Learner recognizes that the teacher's model of the learner is stale (thought bubble): it fails to reflect knowledge the learner has already acquired. The learner issues an understanding statement, a preference pair that reveals this knowledge, repairing the teacher's model before the next query.}
  % \caption{The ToM-2 Learner recognizes that the teacher's model of the learner assumes no knowledge of the mug location (thought bubble). The learner issues an understanding statement, a preference pair showing it already knows the mug goes above the plate, repairing the teacher's model before the next query.}
  \label{fig:diagram}
\end{figure}

For this teacher guidance to be effective, however, the teacher requires a model of what the learner currently knows. When this model is accurate, every query is maximally informative, neither redundant nor unanswerable. When it drifts, the teacher issues examples the learner has already absorbed, and the dimensional advantage erodes. This is especially likely with multiple teachers, where each one's model is built only from the turns they were present for, so it goes stale the moment another teacher takes over, and stale in whatever direction that teacher taught. Consider a household robot learning to set the table, taught across a week by several family members with no one present for the whole process: one teaches plate placement on Monday, another corrects silverware handling later in the week (Fig.~\ref{fig:diagram}). This is a human-autonomy team (HAT) problem~\cite{ONeill2022HumanAutonomyTeaming,Andrews2023SharedMentalModel}: the human and robot share one goal, and the teacher's behavioral model of the learner is the variable that governs team performance. We study the multi-teacher case as a controlled source of drift.

To keep this model synchronized, we build on \emph{understanding statements},
a second-order Theory-of-Mind (ToM-2) mechanism in which the learner communicates information that repairs the teacher's model of what the learner
knows~\cite{callaghan-arix25,callaghan-arix26}. We recast understanding statements for continuous preference-based reward learning: rather than a linguistic statement
over enumerated features, the learner emits a preference constraint in the same
action space the teacher uses to teach, selecting the one that most reduces the
teacher's model error. We further situate the mechanism in a multi-teacher
setting, where model drift arises from staleness across alternating teachers
rather than a single teacher's bias, and ask whether targeting the specific
direction of that error (ToM-2) improves compared to simply reporting the belief mean.

We contribute:
\begin{enumerate}
    \item A HAT formalization of preference-based reward learning in which the
    accuracy of the teacher's model of the learner is the central performance
    variable (Section~\ref{sec:formulation}).
    \item A formulation of understanding statements~\cite{callaghan-arix26} for
    continuous preference learning, in which the statement is a preference
    constraint in the teaching action space, and a characterization of when
    second-order (direction-targeting) statements improve on first-order statements that simply report the belief mean, namely when the teacher's model error is anisotropic (Section~\ref{sec:method}).
    \item An analysis showing teacher-guided query construction achieves per-round alignment versus $(1/\sqrt{d})$, versus $(1/d)$ for \emph{any}
    belief-only acquisition rule, information gain included (\ref{sec:analysis}).
    \item A multi-teacher simulation study validating the dimensional teacher
    advantage and showing that alternating-teacher drift degrades performance
    and is repaired by understanding statements, with second-order statements
    outperforming mean-belief statements (Section~\ref{sec:results}).
\end{enumerate}

% ── 2. Related Work ───────────────────────────────────────────────────────────
\section{Related Work}
\label{sec:related}

\paragraph{Preference-based reward learning}
Active preference-based reward learning casts the human as an oracle
answering learner-generated queries, selected to remove belief volume
\cite{sadigh-rss17} or in batches \cite{biyik-corl18}. Volume removal was
later shown to admit degenerate queries; information-gain acquisition
dominates it and yields easier queries \cite{biyik-corl19}. These methods,
including the RLHF line \cite{christiano-neurips17}, share the
assumption that the human does not steer the curriculum. We show that this
assumption forfeits a $\Theta(\sqrt d)$-per-round alignment gain available to
a teacher who knows and can construct toward the target reward.

\paragraph{Machine teaching}
Machine teaching studies how a target-aware agent selects examples to
drive a learner toward a desired hypothesis, with sample complexity
governed by the teaching dimension~\cite{goldman-jcss95,zhu-survey18}; in
reward learning, algorithmically selected demonstrations identify a target
reward far faster than naturally chosen ones~\cite{cakmak-aaai12}, and
robots have taught their policies to people by selecting demonstrations
informative for the human's IRL, including counterfactual reasoning over
the human's current beliefs~\cite{lee-frontiers21,lee-iros22}. We use this
framework to ground the \emph{understanding statement}: the robot learner
acts as a machine teacher toward the human, selecting the preference
constraint that most efficiently drives the human's model of the learner
toward the learner's true belief. The target here is not the reward but
the learner's own belief state, and the learner, not the human, runs the
selection.

\paragraph{Behavioral models in Human-Autonomy Teams}
Cooperative IRL frames human and robot as jointly optimizing a shared reward,
with the human able to teach rather than merely demonstrate
\cite{hadfield-menell-neurips16}, and legibility makes an agent's behavior
interpretable to its partner \cite{dragan-hri13}. In team cognition, shared
mental models predict coordination and performance \cite{Andrews2023SharedMentalModel}. We treat the teacher's model of the learner
as the shared-mental-model variable and introduce understanding statements as
a low-cost protocol for repairing it; this complements CIRL with a learner-to-teacher
belief-synchronization channel.

\paragraph{Theory of Mind in HRI}
Recursive mental-state inference underlies action understanding
\cite{baker-cognition09} and pedagogical reasoning, where teacher and learner
model one another \cite{shafto-cogpsych14}. I-POMDPs formalize the nested belief
modeling this requires \cite{gmytrasiewicz-doshi-jair05}, extended to treat
communication itself as a belief-changing action selected by decision-theoretic
planning \cite{gmytrasiewicz-jair20}. Most directly, Callaghan et
al.~\cite{callaghan-arix25,callaghan-arix26} introduce \emph{understanding statements}, by which a ToM-2 robot learner corrects a human
teacher's erroneous beliefs about what the learner knows, with a human study
showing they elicit more informative teaching when the teacher is subject to
cognitive bias. We adopt their mechanism but differ in form and setting: our
understanding statement is a preference constraint in the same action space the
teacher uses to teach, rather than a linguistic statement over an enumerated
feature set, and our teacher-model error arises from staleness across
alternating teachers rather than a single teacher's cognitive bias.

% ── 3. Problem Formulation ────────────────────────────────────────────────────
\section{Problem Formulation}
\label{sec:formulation}

\subsection{Reward Learning from Preferences}

We consider a linear reward model $r(\xi) = x \cdot \psi(\xi)$ where
$\psi(\xi) \in \Rd$ is a feature vector for trajectory $\xi$ and
$x \in \Sd$ is the unknown reward weight vector on the unit hypersphere.

A preference pair $(\xi_A, \xi_B)$ induces a halfspace constraint:
the (noiseless) teacher prefers $\xi_A$ iff $(\psi(\xi_A) - \psi(\xi_B)) \cdot  x \geq 0$,
i.e.\ $a \cdot x \geq 0$ where $a = \Delta\psi / \|\Delta\psi\|$.

The learner maintains a belief $\Bt = \{(x_i, w_i)\}_{i=1}^N$ with
$x_i \in \Sd$, $\sum_i w_i = 1$, updated via spherical-cap particle filtering.
Belief alignment is:
\begin{equation}
  \phit = \mathbb{E}_{x \sim \Bt}[x \cdot \xstar] = \bar{x}_t \cdot \xstar,
  \quad \bar{x}_t = \textstyle\sum_i w_i x_i.
\end{equation}

\subsection{HAT Formulation}

We model the teacher as an agent with
(i) knowledge of the true reward $\xstar$,
(ii) an explicit \emph{model of the learner} $\hat{\mathcal{B}}_t^T$
     (the teacher's belief about the learner's current belief), and
(iii) a curriculum policy that maps $\hat{\mathcal{B}}_t^T \to (\xi_A, \xi_B)$.

The team's shared goal is maximizing $\phit$ within a fixed teaching budget.
The key failure mode we study is \emph{model drift}: when $\hat{\mathcal{B}}_t^T$
diverges from $\Bt$, the teacher generates redundant constraints that add no
information, wasting the teaching budget.

\subsection{Understanding Statements}

An understanding statement is a constraint the learner communicates to the teacher
of the form ``the learner's belief is consistent with $a \cdot x \geq 0$''
for some algorithmically selected $a \in \Sd$.
This is not natural language; it is an ordered pair of trajectories chosen to maximally reduce the teacher's model error
$D(\hat{\mathcal{B}}_t^T \| \Bt)$. This machine teaching action is a mirror of the actions performed by the teacher for the learner.

% ── 4. Method ─────────────────────────────────────────────────────────────────
\section{Method}
\label{sec:method}

\subsection{Query Construction}
\label{sec:query_construction}

Both query strategies decompose similarly: they generate a candidate set of
halfspace directions, then select the one that removes the most belief mass.
For a constraint $a \in \Sd$, let $V(a) = \sum_{i:\, a\cdot x_i < 0} w_i$ be
the removed mass. The per-round alignment update satisfies
\begin{equation}
  \Deltaphi = \frac{\Va}{1 - \Va}\,\Qa,
  \qquad \Qa = \phit - \mu^-(a),
  \label{eq:delta_phi}
\end{equation}
with $\mu^-(a) = \frac{1}{\Va}\sum_{i:\,a\cdot x_i<0} w_i (x_i \cdot \xstar)$
the alignment of the removed particles. Since $\xstar$ lies in the retained
halfspace whenever the constraint is valid, $\Qa \geq 0$, so every removal is
non-negative in expectation and $V(a)$ is a valid selection score.

\paragraph{Belief-only baseline (EVR)}
The learner-guided condition selects the balanced cut that maximizes
worst-case removed mass, using only its own belief:
\begin{equation}
  a_{\mathrm{EVR}} = \arg\max_{a \in \Sd} \min\!\bigl(V(a),\, V(-a)\bigr).
\end{equation}
This is representative of any belief-only acquisition rule: it cannot reference
$\xstar$, and (Section~\ref{sec:analysis}) shares the $\Theta(1/d)$ ceiling with
information gain and volume removal alike.

\paragraph{Teacher-guided construction}
The teacher knows $\xstar$, which changes the selection objective. EVR must
hedge against both oracle answers, hence the max-min in $\min(V(a),V(-a))$;
the teacher knows the answer places $\xstar$ in the retained halfspace, so it
maximizes removed mass directly,
\begin{equation}
  a^\star = \arg\max_{a} V(a),
\end{equation}
scored against its model of the learner $\hat{\mathcal{B}}_t^T$. This is the
same removed-mass quantity, but freed from the balanced-cut constraint that
target-agnostic selection is forced into. Candidates are drawn from
counterfactual directions toward $\xstar$ \cite{lee-iros22},
$a_k = (\xstar - \mathrm{cf}_k)/\|\xstar - \mathrm{cf}_k\|$ with
$\mathrm{cf}_k \sim \hat{\mathcal{B}}_t^T$, augmented by the top-$M$
covariance eigenvectors as a floor when counterfactual sampling degrades at
high dimension (Appendix~\ref{app:cf_sampling}).

\subsection{ToM-2 Understanding Statement Generation}
\label{sec:tom2}

An understanding statement is a preference constraint $a \cdot x \geq 0$,
$a \in \Sd$, emitted by the learner and applied by the teacher to its model
of the learner $\hat{\mathcal{B}}_t^T$. It is the same action type the teacher
uses to teach; the channel is symmetric. 
Selecting a statement is thus an influence action in the HAT sense: the learner plans a communication to reshape its teammate's behavioral model and, through it, the teacher's subsequent queries. 
The two selection policies differ
only in whose belief the learner reasons about when selecting $a$.

\paragraph{First-order (mean) statement.}
The learner reasons about its own belief and reports the constraint that
best summarizes it: the halfspace through the belief mean,
\begin{equation}
  a_\mathrm{mean} = \bar{x}_t / \|\bar{x}_t\|, \qquad
  \bar{x}_t = \textstyle\sum_i w_i x_i.
\end{equation}
This is a level-1 statement (``this is what I believe''), computed without
reference to the teacher's model.

\paragraph{Second-order (ToM-2) statement}
The learner reasons about the teacher's model \emph{of the learner} and selects
the constraint that, when the teacher conditions $\hat{\mathcal{B}}_t^T$ on it,
most reduces the teacher's model error:
\begin{equation}
  a_\mathrm{ToM2} = \arg\min_{a}\;
  D\!\left( \hat{\mathcal{B}}_t^T \mid a \cdot x \geq 0 \;\big\|\; \mathcal{B}_t \right),
\end{equation}
where $\hat{\mathcal{B}}_t^T \mid a$ is the particle reweighting induced on the
teacher's model by the disclosed halfspace, and $D$ is a divergence between the
teacher's model and the learner's true belief. The argmin is taken over the same
candidate-direction set used for curriculum construction
(Section~\ref{sec:query_construction}). This is a level-2 statement: ``this is the
correction that best fixes what you believe about me.''

\paragraph{When the two coincide.}
Let $E_t$ denote the teacher's model error, $\hat{\mathcal{B}}_t^T$
relative to $\mathcal{B}_t$. If $E_t$ is \emph{isotropic}, the teacher's
model lags the learner with no preferred direction, the maximally informative correction is the belief mean itself, and the two policies select (near) identical statements. The policies diverge only when $E_t$
is \emph{anisotropic}: when the teacher is wrong in a specific direction, ToM-2
targets that direction while mean statement reports a centroid uninformative
for that error (Fig.~\ref{fig:statement_compare}). Alternating teachers with heterogeneous, stale models induce exactly this
structured error; the study in Section~\ref{sec:results} therefore operates
in the regime where the two policies are predicted to separate, and they
do.

\begin{figure}[t]
  \centering
  \includegraphics[width=.7\linewidth]{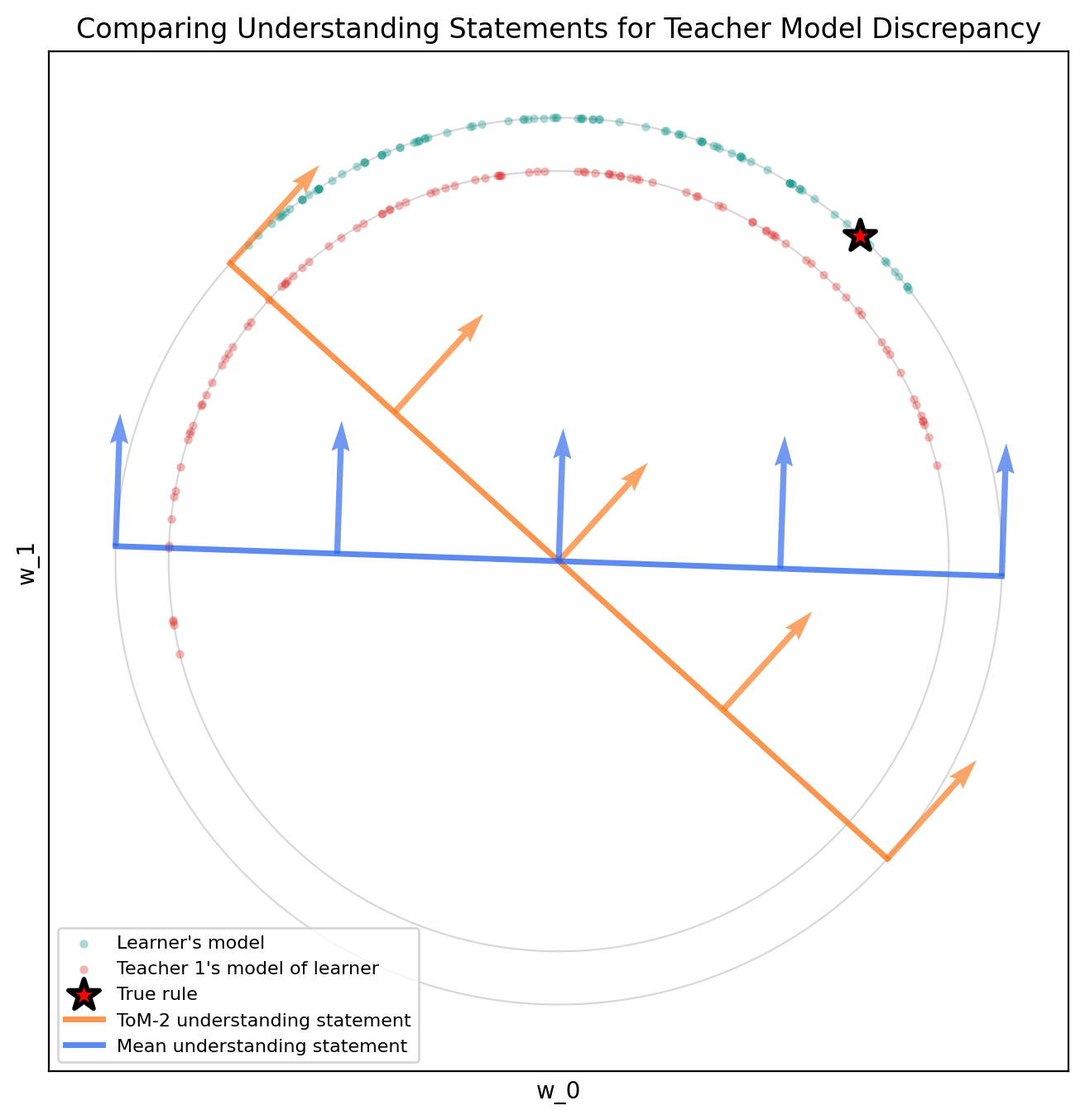}
  \caption{Mean vs.\ ToM-2 statements under anisotropic model error ($d{=}3$ illustration). The learner belief $\Bt$ (teal) is concentrated near $x^*$ (star); the teacher's model $\hat{\mathcal{B}}_t^T$ (red) is stale. The mean statement (blue plane) removes almost no model mass; the ToM-2 statement (orange plane) targets the stale lobe.}
  \label{fig:statement_compare}
\end{figure}

% ── 5. Analysis ───────────────────────────────────────────────────────────────
\section{Analysis: The Value of Target Awareness}
\label{sec:analysis}

Section~\ref{sec:query_construction} reduced both querying strategies to the
same primitive, propose a halfspace direction and score it by removed mass,
so the strategies differ only in what the selector can see: EVR sees the
learner's belief $\Bt$ alone, while the teacher additionally sees the target
$\xstar$. This section quantifies what seeing the target is worth, and on
what premise. Throughout the analysis the teacher's model is synchronized,
$\hat{\mathcal{B}}_t^T = \Bt$; the cost of violating that premise, and the
mechanism for defending it, occupy the remainder of the paper.

The gain identity~\eqref{eq:delta_phi} factors one round of progress into a
prefactor $V(a)/(1-V(a))$, set by how much belief mass the query removes,
and a quality term $\Qa$, set by which mass it removes. Under a Gaussian
surrogate for the particle belief (Appendix~\ref{app:gaussian}), a cut
through the belief mean satisfies
\begin{equation}
  \Qa \;\approx\; \kappa\,\frac{\lvert x^{*\top} C_t\, a\rvert}
  {\sqrt{a^{\top} C_t\, a}},
  \qquad \kappa = \sqrt{2/\pi},
  \label{eq:Q_main}
\end{equation}
so query quality is governed by the alignment between the cut direction and
the target, filtered through the belief covariance. The prefactor, by
contrast, cannot separate the strategies early on: every central halfspace
bisects an isotropic belief, so $V \approx \tfrac12$ and the prefactor is
$\approx 1$ for teacher and learner alike, with the objective difference
(hedged max-min versus direct maximization of $V$) paying off only in
later, anisotropic rounds, and then only in constants. The rate separation
therefore lives entirely in the direction term of~\eqref{eq:Q_main}.

\begin{proposition}[Target-awareness separation]
\label{prop:separation}
Let $\xstar$ be drawn uniformly on $\Sd$ (as in our trials) and consider
the isotropic early-round regime ($t \ll d$, so $C_t \approx \tfrac1d I_d$
and $\bar{x}_t \approx \mathbf{0}$) under the surrogate of
Appendix~\ref{app:gaussian}. Then (i) every \emph{belief-only} rule, i.e.,
every rule selecting $a$ as a function of $\Bt$ alone, achieves
$\mathbb{E}[\Deltaphi] = \Theta(1/d)$; and (ii) a \emph{target-constructed}
query with $\lvert a\cdot\xstar\rvert = \Theta(1)$, such as a
counterfactual direction
$a = (\xstar - \mathrm{cf})/\lVert \xstar - \mathrm{cf}\rVert$ with
$\mathrm{cf} \sim \Bt$, achieves
$\mathbb{E}[\Deltaphi] = \Theta(1/\sqrt{d})$. The teacher's per-round
advantage is a factor $\Theta(\sqrt{d})$ and grows with feature dimension.
\end{proposition}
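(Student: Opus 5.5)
The plan is to work entirely inside the Gaussian surrogate, substituting $C_t \approx \tfrac1d I_d$ into the quality expression~\eqref{eq:Q_main}. For any unit $a$ this gives
\[
\Qa \approx \kappa\,\frac{\tfrac1d\lvert \xstar\cdot a\rvert}{\sqrt{\tfrac1d}} = \frac{\kappa}{\sqrt d}\,\lvert a\cdot\xstar\rvert .
\]
The isotropic regime fixes the prefactor, since every central halfspace bisects the belief. So $\Va\approx\tfrac12$, $\Va/(1-\Va)\approx 1$, and by~\eqref{eq:delta_phi} $\Deltaphi\approx \frac{\kappa}{\sqrt d}\lvert a\cdot\xstar\rvert$ for teacher and learner alike. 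Both parts of the proposition then reduce to computing $\mathbb{E}\lvert a\cdot\xstar\rvert$ under each selection mechanism.

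For (i), the key step is independence. A belief-only rule chooses $a=f(\Bt)$. In the early isotropic regime, $\Bt$ carries (to leading order) no information about $\xstar$, so conditioning on $\Bt$ leaves $\xstar$ uniform on $\Sd$. I will state this explicitly as the modelling premise of the regime $t\ll d$: the belief is rotation-invariant up to $o(1)$ corrections, so $a$ is independent of $\xstar$. Rotation invariance of the uniform measure then gives, for any fixed unit $a$, $\mathbb{E}\lvert a\cdot\xstar\rvert = \mathbb{E}\lvert \xstar_1\rvert$. The standard identity $\mathbb{E}\lvert\xstar_1\rvert = \Gamma(d/2)/(\sqrt\pi\,\Gamma((d+1)/2)) = \Theta(1/\sqrt d)$ (asymptotically $\sqrt{2/(\pi d)}$) holds uniformly in $a$. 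Averaging over any randomization in $f$ yields $\mathbb{E}[\Deltaphi]=\frac{\kappa}{\sqrt d}\Theta(1/\sqrt d)=\Theta(1/d)$, with matching upper and lower constants. This covers EVR, volume removal and information gain simultaneously.

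For (ii), I take $a=(\xstar-\mathrm{cf})/\lVert\xstar-\mathrm{cf}\rVert$ with $\mathrm{cf}\sim\Bt$, independent of $\xstar$. Then
\[
a\cdot\xstar = \frac{1-\mathrm{cf}\cdot\xstar}{\sqrt{2-2\,\mathrm{cf}\cdot\xstar}} = \sqrt{\tfrac{1-\mathrm{cf}\cdot\xstar}{2}} .
\]
Because $\mathrm{cf}\cdot\xstar$ concentrates at $0$ with fluctuations $O(1/\sqrt d)$, we get $a\cdot\xstar\to 1/\sqrt2$. It is also bounded in $[0,1]$ and is at least a constant with probability $1-e^{-\Omega(d)}$, so $\mathbb{E}\lvert a\cdot\xstar\rvert=\Theta(1)$. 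The general claim for any construction with $\lvert a\cdot\xstar\rvert=\Theta(1)$ follows directly, giving $\mathbb{E}[\Deltaphi]=\Theta(1/\sqrt d)$. Taking the ratio of the two parts gives the $\Theta(\sqrt d)$ advantage.

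The main obstacle is justifying the surrogate step rather than the spherical integrals. Equation~\eqref{eq:Q_main} was derived for a cut through the belief mean, and near $\bar x_t\approx\mathbf 0$ every central cut qualifies. I also need $\mu^-(a)$ and $\phit$ to be well approximated by the Gaussian half-space means up to errors $o(1/d)$, so they do not swamp the belief-only rate. My first attempt will bound these errors via the Appendix~\ref{app:gaussian} surrogate, where $\mu^-(a)$ is the truncated-Gaussian mean $-\kappa\, C_t a/\sqrt{a^\top C_t a}$ dotted with $\xstar$. That gives the formula exactly, so the proposition's dependence is only on the surrogate itself, which the statement assumes. I will also note that the independence of $\Bt$ and $\xstar$ is exactly what "$t\ll d$" is meant to encode.
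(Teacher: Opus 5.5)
Your proposal is correct and follows the paper's proof in Appendix~\ref{app:proof} essentially step for step: the prefactor is pinned at $1$ because every central cut bisects the isotropic belief, and the surrogate reduces the quality term to $\frac{\kappa}{\sqrt d}\lvert a\cdot x^*\rvert$; independence of a belief-only $a$ from the uniform target then gives $\mathbb{E}\lvert a\cdot x^*\rvert=\Theta(1/\sqrt d)$ and hence $\Theta(1/d)$, while the counterfactual direction has $a\cdot x^*\to 1/\sqrt2$ and hence $\Theta(1/\sqrt d)$. Your exact formula $\mathbb{E}\lvert x^*_1\rvert=\Gamma(d/2)/(\sqrt\pi\,\Gamma((d+1)/2))$ and exact identity $a\cdot x^*=\sqrt{(1-\mathrm{cf}\cdot x^*)/2}$ (with boundedness used to pass from convergence in probability to expectation) are minor sharpenings of the paper's asymptotic statements, and, like the paper, you take the independence of $\mathcal{B}_t$ from $x^*$ in the $t\ll d$ regime as a modelling premise rather than proving it.
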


\begin{proof}[Proof sketch]
An isotropic belief carries no directional information about $\xstar$, so
any belief-only $a$ is independent of the uniformly drawn target and
$\mathbb{E}\lvert a\cdot\xstar\rvert = \Theta(1/\sqrt{d})$; substituting
into~\eqref{eq:Q_main} with $C_t = \tfrac1d I_d$ gives $Q = \Theta(1/d)$,
while the prefactor is pinned near $1$ as argued above. A counterfactual
direction instead has $a\cdot\xstar \approx 1/\sqrt{2}$ independent of $d$,
giving $Q = \Theta(1/\sqrt{d})$. Full proof in Appendix~\ref{app:proof}.
\end{proof}

\begin{corollary}[Round-complexity gap]
\label{cor:scaling}
While the early-round regime persists, cumulative alignment after $T$
rounds is $\Theta(T/d)$ under any belief-only rule and
$\Theta(T/\sqrt{d})$ under teacher-guided construction, so matching the
teacher's early-phase progress costs a belief-only learner a factor
$\Theta(\sqrt{d})$ more queries (Appendix~\ref{app:cor}).
\end{corollary}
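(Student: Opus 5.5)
The plan is to obtain the corollary by summing the per-round bounds of Proposition~\ref{prop:separation}. Write the cumulative alignment gain as $\phi_T - \phi_0 = \sum_{t=0}^{T-1}\Deltaphi$. The proof rests on the fact that, throughout the window where the early-round regime persists ($C_t \approx \tfrac1d I_d$, $\bar{x}_t \approx \mathbf{0}$, i.e.\ $t \ll d$), the hypotheses of the proposition hold at every round $t$. So part (i) gives $\mathbb{E}[\Deltaphi \mid \Bt] = \Theta(1/d)$ uniformly for any belief-only rule. Likewise, part (ii) gives $\Theta(1/\sqrt d)$ for counterfactual teacher-constructed queries, with $\hat{\mathcal{B}}_t^T = \Bt$ as assumed throughout the section.

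The steps, in order, are as follows.

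\emph{Step 1: uniform constants.} Fix the constants hidden in the $\Theta(\cdot)$ of the proposition. These are the lower and upper constants around $\kappa\,\mathbb{E}\lvert a\cdot\xstar\rvert\sqrt{d}\cdot\tfrac1d$, together with the prefactor bound $V/(1-V)\in[1-\epsilon,1+\epsilon]$. I need to check that these constants depend only on how close $C_t$ is to $\tfrac1d I_d$, and not on $t$ itself.

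\emph{Step 2: sum over rounds.} Take expectations and use the tower property to sum the conditional bounds. This gives $\mathbb{E}[\phi_T-\phi_0] = \sum_t \mathbb{E}\,\mathbb{E}[\Deltaphi\mid\Bt] = \Theta(T/d)$ for any belief-only rule, and $\Theta(T/\sqrt d)$ for teacher construction.

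\emph{Step 3: invert for query counts.} To match teacher progress $cT/\sqrt d$, the learner's progress $c'T'/d$ requires $T' = \Theta(T\sqrt d)$. This is a factor $\Theta(\sqrt d)$ more queries. It is also consistent with staying in the regime, since $T'\ll d$ still needs $T\ll\sqrt d$; I will state the corollary's validity window accordingly.

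The main obstacle is Step 1, i.e.\ showing that the early-round regime genuinely persists for $T$ rounds under both strategies. The concern is the teacher. Each target-directed cut removes mass preferentially on the side away from $\xstar$, so $\bar{x}_t$ drifts toward $\xstar$ at rate $1/\sqrt d$ per round, and $C_t$ shrinks along $\xstar$. After $t$ rounds, $\bar{x}_t\cdot\xstar \approx t/\sqrt d$. This means the isotropy approximation holds only while $t\ll\sqrt d$. My first attempt will be to define the regime explicitly as $t\le \tau_d = o(\sqrt d)$. On that range I will bound the perturbation of $C_t$: each halfspace cut multiplies the variance along one direction by a constant factor and leaves the orthogonal complement essentially unchanged, so $\lVert C_t - \tfrac1d I\rVert$ is affected in at most $t$ directions. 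I will then show that the quality formula \eqref{eq:Q_main} changes by only a $1+o(1)$ factor. For belief-only rules the perturbation is irrelevant to the independence argument, because $a$ remains a function of $\Bt$. Since $\Bt$ now carries weak information about $\xstar$, I would bound that leakage by $O(t/d)$ in alignment, which keeps $\mathbb{E}\lvert a\cdot\xstar\rvert=\Theta(1/\sqrt d)$ for $t\ll d$.
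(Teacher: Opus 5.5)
Your Steps~2 and~3 match the paper's proof. That proof sums the per-round rates of Proposition~\ref{prop:separation} and equates $T_{\mathrm{bo}}/d=T_{\mathrm{tg}}/\sqrt d$. It leaves the corollary conditional on ``while the regime assumptions hold'' and never quantifies that window.

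The gap is in what you add. You cannot carry out Step~1 with $t$-uniform constants on a window $t\le\tau_d=o(\sqrt d)$, because the teacher's own cuts end the regime after $O(1)$ rounds.

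\begin{itemize}
\item \emph{Wrong isotropy test.} You test isotropy by $\lVert\bar x_t\rVert\ll 1$. But \eqref{eq:Q_main} and the pinned prefactor need the cut to pass through the mean on the scale of the belief's spread, $\lvert a\cdot\bar x_t\rvert\ll\sqrt{a^\top C_t a}=\Theta(1/\sqrt d)$. Counterfactual directions $a_k\approx(\xstar-\mathrm{cf}_k)/\sqrt2$ all share the component $\xstar/\sqrt2$. So $a_{t+1}\cdot\bar x_t\approx\phit/\sqrt2$, which is already a constant fraction of the spread after a single round.
\item \emph{The low-rank argument does not transfer.} It works for belief-only cuts, whose span captures only about $\sqrt{t/d}$ of $\xstar$. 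It fails for the teacher: $\mathrm{span}(a_1,\dots,a_t)$ contains $\xstar$ up to a factor $\sqrt{t/(t+1)}$. The perturbed directions are therefore exactly those entering $x^{*\top}C_t a$.
\item \emph{Explicit computation.} Let $g_0=\sqrt d\,\xstar\cdot x$ and $g_k=\sqrt d\,\mathrm{cf}_k\cdot x$, approximately i.i.d.\ standard normal under the prior. The constraint $a_k\cdot x\ge 0$ reads $g_0\ge g_k$, so after $t$ teacher cuts the retained set is $\{g_0\ge\max_{k\le t}g_k\}$. Hence $V(a_{t+1})\approx 1/(t+2)$ and the prefactor is $\approx 1/(t+1)$. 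Moreover $\phit\approx\mathbb{E}[\max(g_0,\dots,g_t)]/\sqrt d\approx\sqrt{2\ln t/d}$, not $t/\sqrt d$.
\end{itemize}

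No choice of teacher cuts repairs this. Take any $T$ central halfspaces applied to the uniform prior, and let $P$ project onto the span of their normals. The retained set depends only on the direction of $Px$, which is independent of $\lVert Px\rVert$. The component of $x$ orthogonal to the span averages to zero. Hence $\phi_T\le\mathbb{E}\lVert Px\rVert\le\sqrt{T/d}$ for every teacher and every $\xstar$. This is incompatible with $\Theta(T/\sqrt d)$ uniformly in $T$ unless $T=O(1)$.

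So the corollary is provable only in the conditional form the paper gives, where for the teacher the regime means a bounded number of rounds. The $\Theta(\sqrt d)$ query ratio still follows, since the matching $T_{\mathrm{bo}}=\Theta(\sqrt d)\ll d$ keeps the learner in its own regime. You should drop the $T\ll\sqrt d$ window and the $t$-uniform constants.

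Your belief-only leakage step is also wrong as stated. After $t$ generic balanced cuts, the belief-only rule $a=\bar x_t/\lVert\bar x_t\rVert$ has $a\cdot\xstar\approx\sqrt{2t/(\pi d)}$, not $\Theta(1/\sqrt d)$. The $O(1/d)$ bound is better obtained without any independence claim. Since $\Bt$ is the exact posterior of the uniformly drawn $\xstar$, every belief-only $a$ satisfies $\mathbb{E}[\Deltaphi\mid\Bt]=V(1-V)\lVert m_+-m_-\rVert^2\le\lambda_{\max}(C_t)$. Here $m_\pm$ are the belief means on the two sides of the cut.
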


\begin{remark}[Information gain reduces to balanced cuts]
\label{rem:ig}
Under a noiseless oracle, a query splitting belief mass $(m, 1-m)$ yields a
deterministic answer given $x$, so its expected entropy reduction is the
answer entropy $H(m)$ and its expected removed mass is $2m(1-m)$; both are
maximized at $m = \tfrac12$. Information gain and volume removal therefore
select from the same balanced-cut family and fall under
Proposition~\ref{prop:separation}(i). The degeneracies of volume removal
identified by B{\i}y{\i}k et al.~\cite{biyik-corl19} require a noisy
response model and do not arise here.
\end{remark}

Two caveats delimit the result, and one consequence structures the rest of
the paper. First, this is an early-phase statement: as alignment
accumulates, $\bar{x}_t$ itself becomes informative about $\xstar$ and
belief-only rules recover some aim; the proposition bounds the regime
before that information is appreciable, which is precisely where complex
(high-$d$) rewards spend most of their teaching budget. Second, the
$\Theta(1/\sqrt{d})$ rate presumes a well-aimed counterfactual is actually
found; the probability of sampling one decays exponentially in $d$, which
is why the candidate set is floored with covariance eigenvectors
(Appendix~\ref{app:cf_sampling}). The consequence: part (ii) is conditional
on the teacher's model. The teacher scores $V(a)$ against
$\hat{\mathcal{B}}_t^T$, not $\Bt$, and a query aimed with a stale model
removes mass the learner no longer holds; a constraint the learner has
already absorbed removes none at all. The dimensional advantage is thus
purchased with a new liability, \emph{teacher-model drift}, and the question
becomes whether a low-bandwidth learner-to-teacher channel can defend it.
Section~\ref{sec:tom2} constructed that channel, and
Section~\ref{sec:results} tests three predictions: \textbf{(P1)} with
accurate teacher models, every teacher-guided condition dominates EVR at
fixed $d$; \textbf{(P2)} the teacher-EVR gap widens with $d$; \textbf{(P3)}
alignment tracks teacher-model error, so drift erodes the advantage and
understanding statements restore it.

\begin{figure*}[t]
  \centering
  % Panel A: Learning Curves
  \begin{minipage}{0.32\textwidth}
    \centering
    \includegraphics[width=\linewidth]{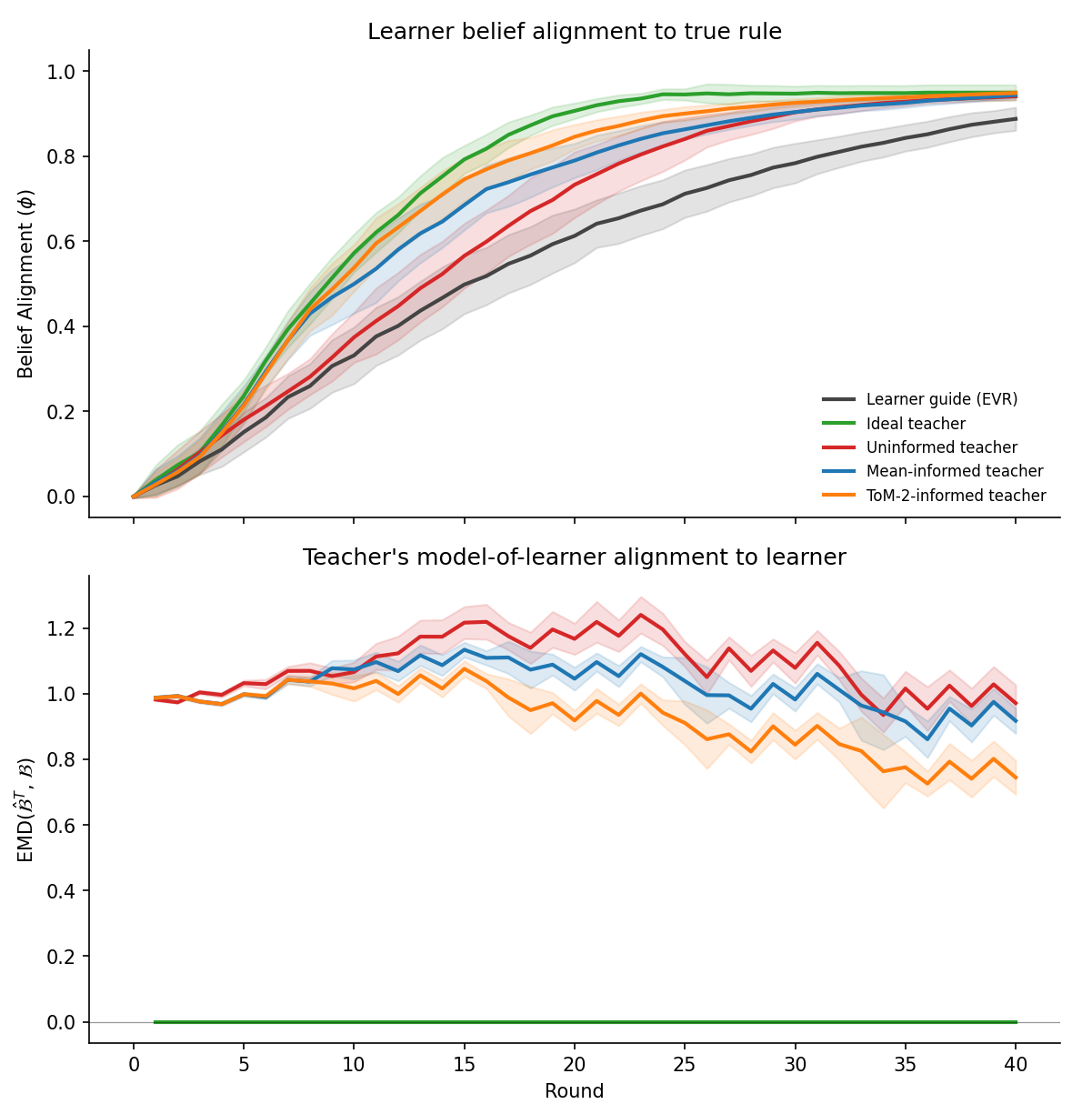}
    \centerline{(a) Alternating teachers ($d{=}20$)}
  \end{minipage}\hfill
  % Panel B: Dose Response
  \begin{minipage}{0.32\textwidth}
    \centering
    \includegraphics[width=\linewidth]{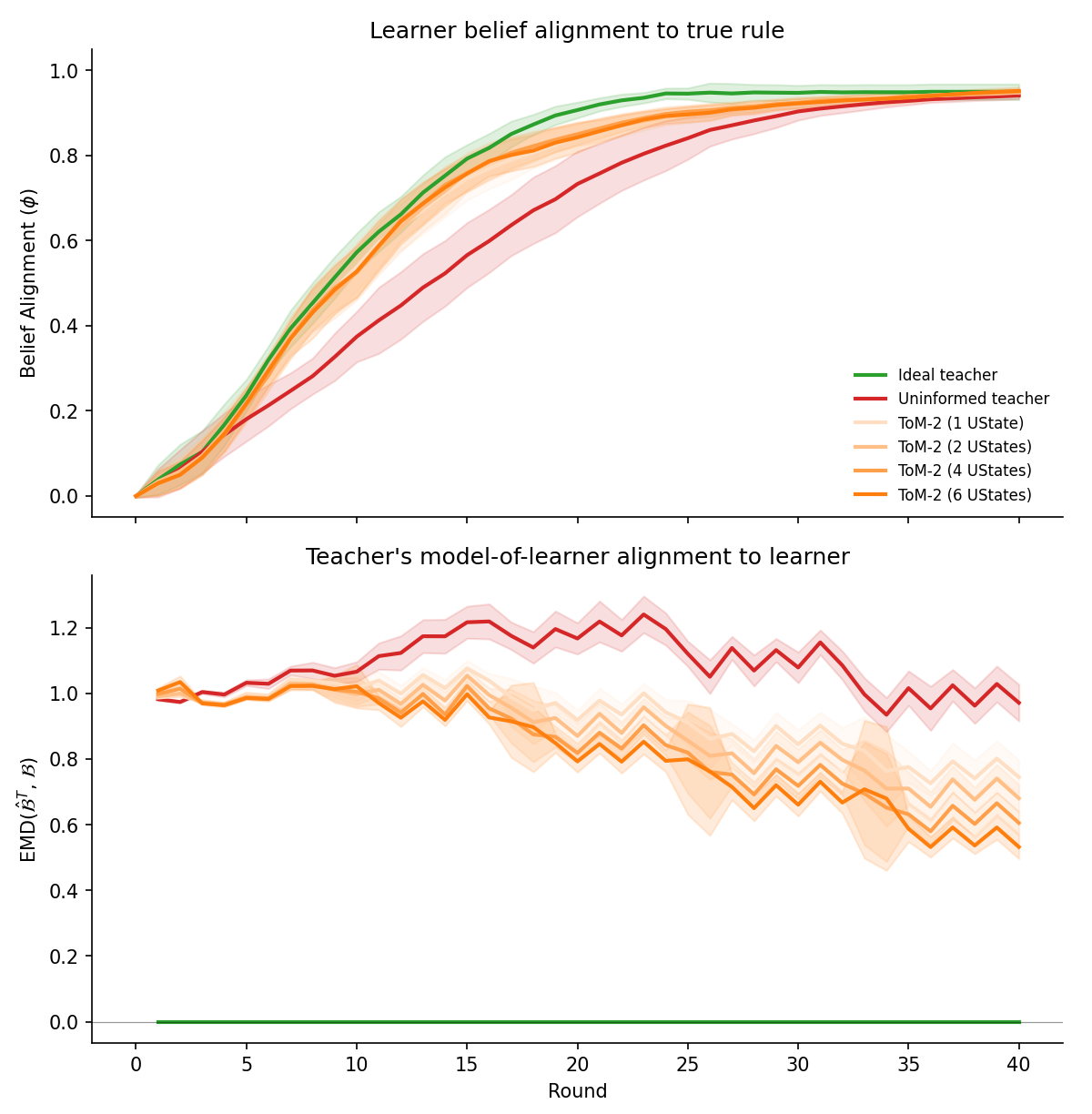}
    \centerline{(b) ToM-2 statement budget ($d{=}20$)}
  \end{minipage}\hfill
  % Panel C: Dimensionality Scaling
  \begin{minipage}{0.32\textwidth}
    \centering
    \includegraphics[width=\linewidth]{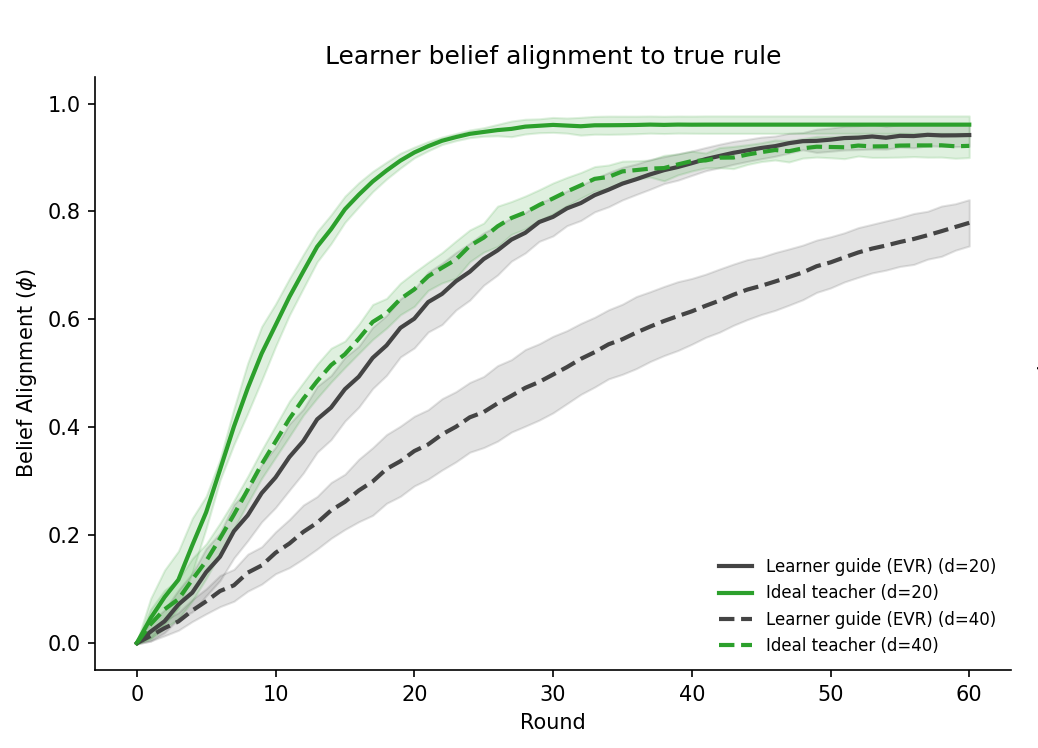}
    \centerline{(c) Dimensionality scaling ($d{=}20$ vs $40$)}
  \end{minipage}
  \vspace{0.2cm} % Adds a tiny buffer before the caption
  \caption{Simulation results (bands denote $\pm$1 std over 50 trials). \textbf{(a)} Learner belief alignment $\phi_t$ (top) and teacher-model error EMD (bottom) under four alternating teachers. \textbf{(b)} Varying the ToM-2 statement budget $u \in \{1,2,4,6\}$ per turn, showing recovery of drift-induced alignment loss. \textbf{(c)} EVR vs.\ Ideal alignment at $d{=}20$ and $d{=}40$, demonstrating the widening gap at matched rounds as feature dimension grows.}
  \label{fig:combined_results}
\end{figure*}

% ── 6. Simulation Results ─────────────────────────────────────────────────────
\section{Simulation Results}
\label{sec:results}

All simulations use a halfspace particle filter on $\Sd$ with $d=20$ and
$N=5000$ particles updated via spherical-cap reweighting; the
dimensionality study (Fig.~\ref{fig:combined_results}c) uses $d=40$ with $N=10{,}000$ to
hold particle density roughly comparable. Teaching is performed by four
simulated teachers, each taking two consecutive turns before handing off.
We compare five conditions. \emph{Learner-guided (EVR)}: the learner
selects balanced cuts from its own belief; no teacher model exists.
\emph{Ideal}: all teachers' models of the learner are kept exactly
synchronized every round, the no-drift ceiling. The remaining conditions
share a multi-teacher protocol: four teachers alternate in round-robin,
and each teacher's model is updated only by the preferences that teacher
itself issued, so a model goes stale between turns. \emph{Uninformed}
teachers receive no statements; \emph{Mean-informed} and
\emph{ToM-2-informed} teachers receive one understanding statement at the
start of each turn, selected by the first- and second-order policies of
Section~\ref{sec:tom2}, respectively. 
Teacher-model error is measured as the Earth Mover's Distance between
$\hat{\mathcal{B}}_t^T$ and $\Bt$ with geodesic ground cost; we avoid
mean-direction cosine similarity because the mean statement drives the two
means together even when the full distributions remain distinct, which
would spuriously favor the Mean-informed condition.
Results average 50 independent
trials, with $\xstar$ drawn uniformly on $\Sd$ per trial. Constraints are
sampled directly on $\Sd$ without domain validity restrictions.

\subsection{Learning Curve Comparison}
Fig.~\ref{fig:combined_results} a (top) shows the alignment ordering
Ideal $>$ ToM-2 $>$ Mean $>$ EVR at every round. All teacher-guided
conditions dominate the learner-guided baseline, confirming \textbf{(P1)}. Drift is costly: the Uninformed
condition falls below both informed conditions. The bottom panel shows why. Under the
Ideal condition model error is identically zero; without statements it
spikes when teachers alternate; mean statements bounds it; and ToM-2
statements leads to a steady decrease. The alignment ordering mirrors the model-error ordering, confirming \textbf{(P3)}: the teacher's model of the learner, not the acquisition rule alone, governs team performance.

\subsection{Teacher-Model Drift and Repair}
Fig.~\ref{fig:combined_results} b varies the number of ToM-2 understanding
statements per teacher turn, $u \in \{1,2,4,6\}$, between the Uninformed
($u=0$) and Ideal endpoints. Model error (bottom) decreases
in $u$, while alignment (top) approaches the Ideal curve  at every statement budget. Each statement costs one preference-sized communication, so $u$ is the knob trading communication
budget against synchronization; the observation that small $u$ recovers
most of the drift-induced loss is the practically relevant result, since
each statement adds interruption and workload for the human teacher.

\subsection{Scaling with Feature Dimensionality}
Fig.~\ref{fig:combined_results} c repeats the comparison at $d=40$. The gap between
teacher-guided conditions and EVR at matched rounds is visibly larger than
at $d=20$, confirming \textbf{(P2)} (Proposition~\ref{prop:separation}). Two
dimensionalities do not trace the $\Theta$ rates; a gap-versus-$d$ sweep
is left to the extended version.

% ── 7. Discussion and Future Work ─────────────────────────────────────────────
\section{Discussion and Future Work}
\label{sec:discussion}
\paragraph{Socio-technical deployment.}
The multi-teacher protocol is not a stress test but the default condition
of deployed systems: households where several caregivers instruct one
assistive robot, manufacturing cells where operators hand a system across
shifts. In each case the incoming teacher's behavioral model of the learner is stale on arrival. Understanding statements serve as a bounded, designer-set number of
preference-sized messages per handoff, with $u$ an explicit dial between
synchronization quality and the interruption load placed on the human. The
EMD trace in Fig.~\ref{fig:combined_results} a doubles as a team-level
diagnostic, a computable proxy for shared-mental-model
similarity~\cite{Andrews2023SharedMentalModel} that a deployed system could
monitor to decide when a statement is worth its cost.

\paragraph{Human study.}
A four-condition human study (EVR, teacher no statement, teacher + mean
statement, teacher + ToM-2 statement) is under development using a table arrangement domain where the human is teaching the agent how to arrange objects onto a table.

% ── 8. Conclusion ─────────────────────────────────────────────────────────────
\section{Conclusion}
\label{sec:conclusion}

We recast preference-based reward learning as a human-autonomy team problem
in which the teacher's model of the learner, not the learner's acquisition
rule, is the governing variable. A teacher who knows the target constructs
queries that no belief-only rule can match, with a per-round advantage that
grows as $\Theta(\sqrt{d})$, precisely where complex rewards make
learner-led querying weakest. That advantage is fragile to model drift, and
understanding statements restore it at the cost of one preference-sized statement per turn, with second-order selection outperforming mean-belief selection under the anisotropic error that multi-teacher deployment
induces. Because the statement reuses the teaching channel itself, the
mechanism adds no new modality, making it a low-cost extension to existing
preference-learning pipelines and a concrete hypothesis for the human study
to come.

% ── References ────────────────────────────────────────────────────────────────
\bibliographystyle{IEEEtran}
\bibliography{citations}

% Inline placeholders until .bib is set up:
% ── Appendix ──────────────────────────────────────────────────────────────────
\appendices

\section{Gaussian Surrogate and the Quality Term}
\label{app:gaussian}
The particle-weighted average $\mu^-(a)$ of~\eqref{eq:delta_phi} is a Monte
Carlo estimate of $\mathbb{E}[x\cdot\xstar \mid a\cdot x < 0]$ under the
learner's belief; we compute this expectation in closed form via a
Gaussian surrogate, $x \sim \mathcal{N}(\bar{x}_t, C_t)$, treating the
sphere constraint as absorbed into the moments. This is accurate while the
belief is unimodal with spread small relative to the unit radius, and
matches the uniform sphere to leading order in the isotropic limit
$C_t = \tfrac1d I_d$, $\bar{x}_t = \mathbf{0}$ used in the proof. Let
$s = a\cdot x \sim \mathcal{N}(a\cdot\bar{x}_t,\; a^{\top} C_t a)$ and
consider a cut through the mean, $a\cdot\bar{x}_t \approx 0$. Joint
Gaussianity gives
$\mathbb{E}[x \mid s] = \bar{x}_t + C_t a\,(s - a\cdot\bar{x}_t)/(a^{\top}
C_t a)$, and the half-normal mean gives
$\mathbb{E}[s \mid s<0] = -\sqrt{2/\pi}\,\sqrt{a^{\top} C_t a}$, hence
\begin{equation}
  \mu^-(a) \;\approx\; \phit - \kappa\,
  \frac{x^{*\top} C_t\, a}{\sqrt{a^{\top} C_t\, a}},
  \qquad \kappa = \sqrt{2/\pi}.
  \label{eq:truncated_mean}
\end{equation}
Because the oracle (or the teacher) orients every applied constraint so
that $\xstar$ lies in the retained halfspace, the removed side is the one
misaligned with $\xstar$ and the shift term is non-negative; via
$\Qa = \phit - \mu^-(a)$ this yields~\eqref{eq:Q_main}.

\section{Proof of Proposition~\ref{prop:separation}}
\label{app:proof}

Recall the exact identity~\eqref{eq:delta_phi}: for any applied constraint
retaining $\xstar$, $\Deltaphi = \tfrac{V}{1-V}\,\Qa$ with $\Qa \geq 0$.
We control the two factors separately.

\paragraph{Prefactor}
For an isotropic belief every central halfspace removes half the mass:
$V(a) = \tfrac12 + O_P(N^{-1/2})$ for every $a \in \Sd$, by symmetry of the
particle distribution. Hence $V/(1-V) = 1 + o_P(1)$ for every rule,
belief-only or target-constructed; no selection objective can move the
prefactor off $\Theta(1)$ in this regime.

\paragraph{Direction term, belief-only}
Substituting $C_t = \tfrac1d I_d$ into~\eqref{eq:Q_main} gives
\begin{equation}
  \Qa \approx \frac{\kappa}{\sqrt{d}}\,\lvert a\cdot\xstar\rvert.
  \label{eq:Q_iso}
\end{equation}
A belief-only rule computes $a$ from $\Bt$, which in this regime is
rotationally exchangeable and carries no directional information about
$\xstar$; equivalently, $a$ is independent of the uniformly drawn target.
For any such $a$,
$\mathbb{E}\lvert a\cdot\xstar\rvert = \sqrt{2/(\pi d)}\,(1+o(1)) =
\Theta(1/\sqrt{d})$, so $\mathbb{E}[Q] = \Theta(1/d)$ and
$\mathbb{E}[\Deltaphi^{\text{belief-only}}] = \Theta(1/d)$. The argument
uses only independence from $\xstar$, so it binds EVR, volume removal, and
information gain alike (Remark~\ref{rem:ig}); the maximization over
candidates that boosts target-aware rules is unavailable without $\xstar$.

\paragraph{Direction term, target-constructed}
For $\mathrm{cf} \sim \Bt$ isotropic,
$\mathrm{cf}\cdot\xstar = O_P(1/\sqrt{d})$, so
$\lVert \xstar - \mathrm{cf}\rVert = \sqrt{2}\,(1+o_P(1))$ and
\begin{equation}
  a\cdot\xstar
  = \frac{1 - \mathrm{cf}\cdot\xstar}{\lVert \xstar - \mathrm{cf}\rVert}
  \;\longrightarrow\; \tfrac{1}{\sqrt{2}},
\end{equation}
a constant independent of $d$. Then $Q = \Theta(1/\sqrt{d})$
by~\eqref{eq:Q_iso} and
$\mathbb{E}[\Deltaphi^{\text{target}}] = \Theta(1/\sqrt{d})$. The
separation is therefore driven entirely by the direction term: the ability
to aim at $\xstar$ is worth a factor of $\sqrt{d}$, and the gap widens
monotonically with $d$. \hfill$\square$

\section{Proof of Corollary~\ref{cor:scaling}}
\label{app:cor}

While the regime assumptions hold, per-round gains are additive up to
constants, so cumulative alignment after $T$ rounds is $\Theta(T/d)$
versus $\Theta(T/\sqrt{d})$. Equating
$T_{\mathrm{bo}}/d = T_{\mathrm{tg}}/\sqrt{d}$ gives
$T_{\mathrm{bo}} = \sqrt{d}\,T_{\mathrm{tg}}$. The comparison is confined
to the early phase: once either method concentrates the belief, gains
compound and the regime, along with the proposition, expires.
\hfill$\square$

\section{Counterfactual Sampling at High Dimension}
\label{app:cf_sampling}

The target-constructed rate assumes a direction with
$\lvert a\cdot\xstar\rvert = \Theta(1)$ is actually found. Sampling
$\mathrm{cf}\sim\hat{\mathcal{B}}_t^T$ to obtain
$V(a_{\mathrm{cf}}) \geq V^\star - \epsilon$, where
$V^\star = \max_{a\in\Sd} V(a)$, succeeds only when $\mathrm{cf}$ lands in
a spherical cap of angular radius $\Theta(\epsilon^{1/2})$, whose measure
scales as $\Theta(\epsilon^{(d-1)/2})$. Success with probability
$1-\delta$ over $K$ independent draws requires
$K = \Omega\!\left(\log(1/\delta)\,/\,\epsilon^{(d-1)/2}\right)$. This
exponential dependence on $d$ motivates flooring the candidate set with
the top-$M$ covariance eigenvectors
(Section~\ref{sec:query_construction}). Under isotropy an eigenvector is
an $\xstar$-agnostic direction and inherits the belief-only rate
$\Theta(1/d)$; its value appears in later, anisotropic rounds, once $C_t$
develops eigenvalue spread and some $v_j$ correlates with the error
$\xstar - \bar{x}_t$. It is a performance floor, not a target-constructed
query in its own right.

\end{document}